\pdfoutput=1
\documentclass[english]{article}

\usepackage{amsmath}
\usepackage[export]{adjustbox}
\usepackage{amssymb}
\usepackage[final]{corl}

\usepackage{amsthm}

\usepackage{booktabs}
\usepackage{multirow}
\usepackage{wrapfig,lipsum,booktabs}

\usepackage{caption}
\usepackage{subcaption}

\usepackage{my}[basicpaper]

\usepackage{capt-of}%
\usepackage{booktabs}
\usepackage{varwidth}

\usepackage{varwidth}

\linepenalty=1000
\addtolength{\dblfloatsep}{-50mm}
\addtolength{\textfloatsep}{-3mm}

\usepackage{graphicx,pstricks,cutwin}
\usepackage[singlespacing]{setspace}
\usepackage{lipsum}
\usepackage{wrapfig}
\usepackage{wrapfig}
\usepackage[boxed,ruled,vlined,linesnumbered]{algorithm2e}
\DontPrintSemicolon
\SetKwProg{Fn}{function}{}{}
\SetKwFunction{FnSampleFree}{SampleFree}
\SetKwFunction{FnRestartArm}{RestartArm}
\SetKwFunction{FnPickArm}{PickArm}
\SetKwFunction{FnRewire}{Rewire}
\SetKwFunction{FnNearest}{Nearest}
\SetKwFunction{FnRestartArm}{RestartArm}
\SetKwComment{Comment}{$\triangleright$\ }{}
\SetKwInput{KwInit}{Initialise}

\usepackage{cleveref}                                        %
\crefname{assumption}{assumption}{assumptions}
\crefname{problem}{problem}{problems}
\crefname{algorithm}{Alg.}{Algs.}
\Crefname{algorithm}{Algorithm}{Algorithms}
\crefname{figure}{Fig.}{Figs.} %
\crefformat{equation}{(#2#1#3)}
\crefrangeformat{equation}{(#3#1#4) to~(#5#2#6)}
\crefmultiformat{equation}{(#2#1#3)}%
{ and~(#2#1#3)}{, (#2#1#3)}{ and~(#2#1#3)}

\newtheorem{theorem}{Theorem}

\newcommand{\bb}{\mathbf}
\newtheorem{definition}{Definition}[section]

\newtheorem{lemma}{Lemma}[section]

\title{\bf Parallelised Diffeomorphic \\ Sampling-based Motion Planning}

\author{%
Tin Lai$^{*,1}$, Weiming Zhi$^{1}$, Tucker Hermans$^{2,3}$ and Fabio Ramos$^{1,3}$\\
$^{*}$Correspondence to {\tt\small tin.lai@sydney.edu.au}
$^{1}$School of Computer Science,\\ The University of Sydney
$^{2}$School of Computing, University of Utah
$^{3}$NVIDIA, USA
}

\begin{document}

\maketitle
\thispagestyle{empty}
\pagestyle{empty}

\begin{abstract}
We propose Parallelised Diffeomorphic Sampling-based Motion Planning (PDMP). PDMP is a novel parallelised framework that uses bijective and differentiable mappings, or \emph{diffeomorphisms}, to transform sampling distributions of sampling-based motion planners, in a manner akin to normalising flows. Unlike normalising flow models which use invertible neural network structures to represent these diffeomorphisms, we develop them from gradient information of desired costs, and encode desirable behaviour, such as obstacle avoidance. These transformed sampling distributions can then be used for sampling-based motion planning. A particular example is when we wish to imbue the sampling distribution with knowledge of the environment geometry, such that drawn samples are less prone to be in collision. To this end, we propose to learn a continuous occupancy representation from environment occupancy data, such that gradients of the representation defines a valid diffeomorphism and is amenable to fast parallel evaluation. We use this to ``morph'' the sampling distribution to draw far fewer collision-prone samples. PDMP is able to leverage gradient information of costs, to inject specifications, in a manner similar to optimisation-based motion planning methods, but relies on drawing from a sampling distribution, retaining the tendency to find more global solutions, thereby bridging the gap between trajectory optimisation and sampling-based planning methods.
    
\end{abstract}
\keywords{Sampling-based motion planning, diffeomorphism, flows, Sampling distribution, RRT, PRM}
\section{Introduction}
This paper addresses the problem of motion planning, and bridges together two motion planning paradigms: \emph{trajectory optimisation approaches} and \emph{sampling-based approaches}. Trajectory optimisation views robot trajectories as solutions of an optimisation problem. The optimisation problem typically incorporates the environment occupancy, along with additionally specified requirements into a cost function. Gradients of the cost function are often assumed to be accessible, allowing for its efficient optimisation. However, trajectory optimisation approaches are known to suffer from local minima, and are generally not \emph{anytime}. On the other hand, sampling-based planners have a complementary set of benefits. Sampling-based planners are probabilistically complete, and are able to quickly find a feasible solution and improve upon it. However, unlike trajectory optimisation approaches, sampling-based planners are unable to utilise gradient information, either from environment occupancy or from user specification. 

We propose a novel motion planning framework, Parallelised Diffeomorphic Sampling-based Motion Planning (PDMP), which combines the benefits of both sampling-based and trajectory optimisation methods. PDMP is capable of finding globally optimal solutions, while benefiting from gradient information of cost functions to speed up motion planning. In broad strokes, our method leverages gradient information from specified cost functions, which can be learned from environment data or user specified, to construct differentiable bijections, or \emph{diffeomophisms}. Like normalising flows, PDMP uses diffeomorphisms to shape a simple base sampling distribution into a sampling distribution that is more informative. However, to learn invertible transformations for normalising flows, one typically assumes that samples from a desired target distribution are accessible. This is typically not the case when learning sampling distributions: generally, we are given information about the occupancy in the environment, as well as designed costs, rather than samples from a ``good'' sampling distribution. We demonstrate that with relatively mild assumptions, we can obtain diffeomorphisms, from provided cost functions, that allows us to deform the sampling distribution. We provide a specific example of learning a diffeomorphism that conforms to environment occupancy.

The ``morphing'' of sampling distributions allows sampling-based planners, such as Rapidly-exploring Random Trees (RRTs) and its variants, to more efficiently create connections, speeding-up the planning process. Additionally, the transformation of the sampling distribution can be viewed as a parallel process, while building the expanding trees is an inherently sequential process. We integrate CPU and GPU parallelism: we use GPUs to shape the sampling distribution, which can be processed in parallel, while simultaneously using the CPU to build the expanding tree, which is a sequential process. 

Concretely, our contributions are as follows:
(1) We propose a method of shaping sampling distributions of sampling-based planners, such that gradient information, for example from environment occupancy gradient or user specified information can be incorporated, allowing for faster motion planning;
(2) We provide an efficient implementation of our method integrated into an RRT motion-planner which leverages the parallel capabilities of GPUs. We demonstrate that the shaping of the sampling distribution can be done efficiently in parallel in a GPU, simultaneously with the sequential tree-building process resulting in no additional time cost.

We empirically evaluate PDMP on challenging planning scenarios, and find that it is able to consistently find solutions faster than existing sampling-based motion planners.

\section{Related Works}
\textbf{Sampling-based Motion Planning:}
Sampling-based planners are a class of predominant methods to compute motion trajectories for robots. They pose the motion planning problem in a probabilistic setting, where the construction of motion plans are formulated as a graph or tree building procedure.
PRM~\cite{kavraki1996_ProbRoad} was first proposed to creates a random roadmap of connectivity in the configuration space to avoid the curse of dimensionality.
On the other hand, tree-based RRT~\cite{lavalle1998_RapiRand} follows a similar idea but uses tree structures to obtaining solution quicker, which inspire a whole new class of motion planning methods~\cite{lai2018_BalaGlob,bry2011_RapiRand,lee2012_SRRRSele,lai2021_RRF}. Since sampling is one of the core component in sampling-based motion planners, there are a lot of methods that tries to improve the sampling distribution. For example, formulating a restricted distribution to improve planning time~\cite{gammell2018_InfoSamp,lai2020_BayeLoca}. There are also method to learn the sampling distribution from experience using neural network methods~\cite{lai2020_LearPlan,ichter2018_LearSamp,sartoretti2019_PRIMPath}.
However, most learning-based methods learns a skewed distribution based purely from a subset of successful motion plans configurations, which requires a mixture with uniform distribution to maintain the \emph{probabilistic completeness} guarantee~\cite{elbanhawi2014_SampRobo}. 

\textbf{Gradient-based Cost Optimisation in Motion Planning:}
Outside of sampling-based motion planner, many other paradigms of motion planning make use of gradient information of some defined cost function. Trajectory optimisation approaches, such as CHOMP \cite{ratliff2009_CHOMGrad}, STOMP\cite{kalakrishnan2011_STOMStoc} and TrajOpt \cite{schulman2014_MotiPlan}, and potential field approaches \cite{hwang1992_PoteFiel} are prominent examples of this. These approaches find a single solution by descending in the direction of lower cost, guided by the negative gradient. Likewise, our approach transform a distribution such that samples have lower cost by descending based on negative gradients.

\textbf{Diffeomorphisms and Normalising Flows:}
The transforming of distributions via invertible and differentiable mappings is known as \emph{normalising flows} \cite{rezende2015_VariInfe}. These invertible and differentiable mappings are known as ``flows'' \cite{rezende2015_VariInfe} or more generally as ``diffeomorphisms'' \cite{flowsJMLR,Lee2002IntroductionTS}. Normalising flows are typically learned using invertible structures \cite{ardizzone2019analyzing,NEURIPS2019_7ac71d43}, with data drawn from the desired target distribution. We take a different approach, and develop invertible mappings from cost gradients, which can be hand specified or learned from other sources of data, such as occupancy information (outlined in \cref{Continuous_map}).

\section{Parallelised Diffeomorphic Sampling-based Motion Planning}\label{Continuous_map}

We shall detail the proposed Parallelised Diffeomorphic Sampling-based Motion Planning (PDMP) framework. In \cref{learningRep}, we begin by introducing neural network representations of occupancy, which allows for fast batched querying of coordinates via the GPU. This occupancy representation will be used to construct diffeomorphisms which transforms a sampling distribution, such that samples have lower likelihood of being in occupied space. Details on constructing this diffeomorphism, along with those from an arbitrary cost function, are elaborated in \cref{Cost_inf}. Finally, in \cref{parallel}, we expand on how we can leverage both the GPU, for the highly parallelisable transformation of samples, and the CPU, for the inherently sequential tree-building process, to achieve improved performance with the same time budget, for tree-building sampling-based motion planning methods.

\subsection{Learning Continuous Occupancy Representations with Neural Networks}\label{learningRep}
\begin{wrapfigure}[19]{R}{0.24\textwidth}
\vspace{-0.28cm}
    \centering
    \begin{subfigure}{0.8\linewidth}
        \includegraphics[width=\linewidth]{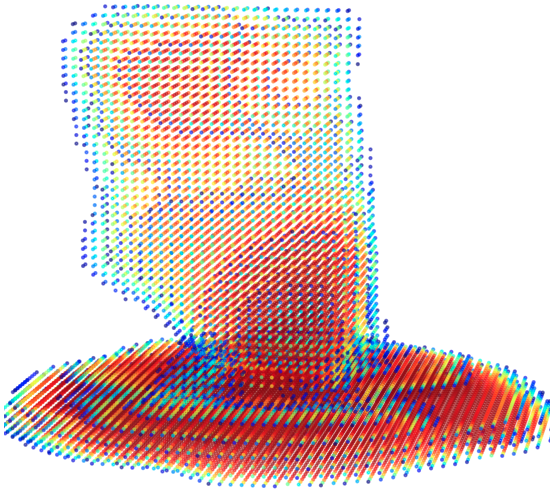}
        \includegraphics[width=\linewidth]{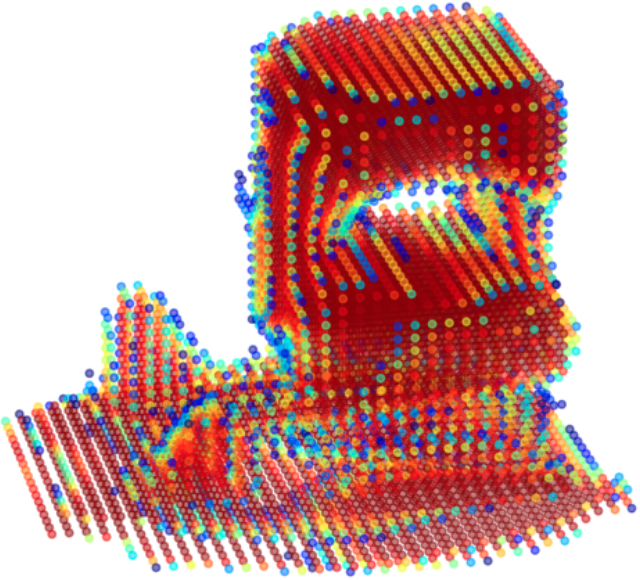}
    \end{subfigure}\hfill%
    \begin{subfigure}{0.18\linewidth}
        \includegraphics[width=\linewidth]{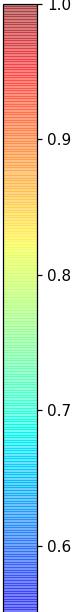}
    \end{subfigure}
    \caption{
        Examples of continuous occupancy representation learned by a neural network (corresponds to~\cref{fig:real-robot}).
        \label{map_example}
        }
        \vspace{0.1cm}
\end{wrapfigure}
Occupancy in an environment have traditionally been represented by occupancy grid maps, which discretise the world into grid-cells and compute occupancy independently for each cell. Recent advancements in machine learning have brought continuous analogues of occupancy maps \cite{HilbertMaps}, and continuous distance-based methods \cite{Park2019DeepSDFLC}. Here we present a straightforward approach of learning the occupancy via a neural network, which is fast to query, fast to obtain derivatives with respect to inputs, and inherently parallelised. These properties are beneficial for querying of large batches of coordinates.  

We are assumed to have a dataset of $n$ pairs, containing coordinates and a binary value, which indicates whether the coordinate is occupied, i.e. $\mathcal{D}=\{(\bb{x}_{i},y_{i})\}_{i=1}^{n}$, where $y_{i}\in\{0,1\}$ for $i=1,\ldots,n$. A dataset of this format can be obtained from depth sensors. Our aim is now to learn a mapping $f_{map}$ between a coordinate of interest $\bb{x}$ and the probability of being occupied at $\bb{x}$, $f_{map}(\bb{x})=p(\bb{y}=1\vert\bb{x})$. We shall model $f_{map}$ as a fully-connected neural network, with $tanh$ activation functions between hidden layers, and a $sigmoid$ activation layer at the output. The resulting setup is a binary classification problem, which can be learned via a binary cross entropy loss with gradient descent optimisers. Derivatives of the neural network can be obtained efficiently, via batched computation on a GPU.

\subsection{Cost-informed Diffeomorphisms for Sampling Distributions}\label{Cost_inf}
In this section we elaborate on building differentiable bijections, or diffeomorphisms, to transform a base distribution such that the ``morphed''  target distribution density is concentrated at where a provided cost function is low. That is, samples from the target distribution are more likely than the base distribution to be sampled from regions with low cost. Diffeomorphisms ensure that the transformed sampling distribution will have the same topology as the base distribution. For example, if the base distribution has infinite support, then the transformed sampling distribution also has infinite support, and will not have ``holes'' where there is no probability  density. 

\textbf{Constructing Diffeomorphisms via integral curves:}
Diffeomorphisms can be generated by taking integral curves on the vector field defined by the negative gradients of the cost function. We consider an $n$-dimensional state vector $\bb{y}\in\mathbb{R}^{n}$ to be an initial time, provided a cost $f_{c}:\mathbb{R}^{n}\rightarrow\mathbb{R}$, an integral curve for some time $t\in\mathbb{R}$, can be written as an initial value problem (IVP):
\begin{align}
    \phi(\bb{y}):=\bb{y}-\int_{0}^{t}\nabla_{\bb{y}(s)}f_{c}(\bb{y}(s))\mathrm{d}s=\bb{z}, && \bb{y}(0)=\bb{y}, \label{diffeo}
\end{align}
where $\bb{z}\in\mathbb{R}^{n}$ results from the Picard–Lindelöf theorem \cite{Coddington1955TheoryOO} (existence and uniqueness of IVPs), which states that if $\nabla_{\bb{y}(s)}f_{c}$ is Lipschitz continuous with respect to $\bb{y}(s)$, then the solution of the IVP exists and is unique. We shall restrict our discussion to cost functions with Lipschitz derivatives, this includes the continuous occupancy representations introduced in \cref{learningRep}. Then, $\phi(\bb{y})$ is a diffeomorphism, and the inverse is given by:
\begin{align}
    \phi^{-1}(\bb{z}):=\bb{z}+\int_{0}^{t}\nabla_{\bb{z}(s)}f_{c}(\bb{z}(s))\mathrm{d}\bb{s}=\bb{y}, && \bb{z}(0)=\bb{z}.
\end{align}
Therefore, we can use numerical integration techniques, such as Euler's method, to solve the IVP to evaluate the diffeomorphism efficiently. 

\textbf{Bringing Diffeomorphisms into Configuration Space:}
Motion-planning in robotics typically requires plans to be made in the \emph{configuration space} (C-space) of the robot. On the other hand, costs to shape robot behaviour can be, and is often, defined in the Cartesian task space. For example, collision checking requires information about the task space geometry of the robot to determine whether it overlaps with objects in the environment. We assume that the sampling distribution is defined in the C-space of the robot, and diffeomorphisms need to operate in the C-space. We shall in particular discuss robot manipulators, where the states in the C-space are joint angles. We denote the C-space as $\mathcal{Q}\in\mathbb{R}^{n}$, where there are $n$ joints. Joint configurations, $\bb{q}\in\mathcal{Q}$, are elements of the C-space, while Cartesian coordinates in task space are denoted as $\bb{x}\in\mathbb{R}^{3}$. We outline how to \emph{pull} a cost gradient defined in the task space to the C-space, and construct a diffeomorphism there. 

We start by defining $b$ body points on the robot, each with a forward kinematics function mapping configurations to the Cartesian coordinates at the body point, $\psi_{i}:\mathcal{Q}\rightarrow\mathbb{R}^{3}$, for each $i=1,\ldots,b$. This allows us to make use of the Jacobian of the forward kinematics functions with respect to the joint configurations. The Jacobian of the $i^{th}$ kinematics function is denoted as $J_{\psi}^{i}(\cdot)=\frac{\mathrm{d}\psi_{i}}{\mathrm{d}\bb{q}}(\cdot)$. A cost potential $f_{c}$ which operates on the body points, such as the occupancy cost potential discussed above, can be \emph{pulled} into the C-space:
\begin{equation}
    \nabla_{\bb{q}}f_{c}=\sum_{i=1}^{b}J_{\psi}^{i}(\bb{q})\nabla_{\bb{x}}f_{c},
\end{equation}
we can then define a diffeomorphism, via solving the IVP as in \cref{diffeo}, in the C-space of the robot.

\textbf{Drawing Samples from the Morphed Target Distribution:}
We can draw samples from the morphed distribution by drawing samples from the known base distribution, then passing the points through the diffeomorphism $\phi$. Unlike normalising flows for density estimation, which are computationally burdened by having to compute the determinant of the Jacobian of $\phi$, we only require the mapping of sampled points, which can be done efficiently, and does not require the Jacobian of $\phi$. Furthermore, we note that morphing the sampled points from the base distribution to the transformed distribution can be done in parallel if the cost gradients can be parallelised. In particular, occupancy gradients as introduced in \cref{learningRep} can be batch computed on a GPU efficiently. In the following sections, we shall elaborate on how to exploit the parallel nature of the morphing sampled points.

\subsection{Parallelised Diffeomorphic Transform of Sampling Distribution}\label{parallel}

Rapidly-exploring random trees (RRTs), its variants, along with its graph-based counter parts are some of the most widely-used motion planning algorithms. In this section, we develop the Parallelised Diffeomorphic Sampling-based Motion Planning (PDMP) algorithm to transform the sampling distribution while constructing trees, efficiently integrating CPU and GPU parallelism. We elaborate on the \emph{diffeomorphic sampler}, which can be largely parallelised, and the \emph{motion planner main thread}, which consists of sequential operations. An overview flow-diagram is shown in~\cref{rrt_building}.

\textbf{Motion Planner Main Thread}
Building random trees is inherently a sequential process -- sampling a random configuration, searching for nearest neighbour in the k-d tree, collision-checking of potential tree edges, connecting the candidate node , and rewiring of other existing nodes. This process requires knowledge of nodes that are currently connected by the tree, and valid nodes are connected to the tree as soon as they are found.  Such a sequential process is repeated within a loop until the time budget exhausted. We will denote this sequential process as the \emph{Motion Planner} because most of the works are to be done within the same thread. Similar to existing methods, we conduct the tree-building on the CPU. However, we proceed with optimising the sampled points in background threads that are parallelised in GPUs. 

When the planning request is first received, the \emph{main thread} spawns a background \emph{boostrap thread} that prepares all necessary house keeping works such as constructing a concurrent queue $S$. Then, the \emph{main thread} will proceed with the rest of the typical tree-building procedures, following the traditional RRT-variant literature. The main modification in this sequenital process lies in the sampling step. Typically, RRT samples from some distribution (e.g. uniform distribution $q_\text{rand}\sim\mathcal{U}(0,1)$) within the same thread. Instead, in our PDMP framework we draw samples from the previously constructed concurrent queue $S$, which is one of the only communication contacts in between the \emph{main thread} and the \emph{diffeomorphic sampler thread}, to avoid any other synchronisation overhead (the other communication happens when the \emph{main thread} requests the background \emph{diffeomorphic sampler} to exits due to time budget being exhausted). This concurrent bucket is filled by our \emph{diffeomorphic sampler thread}. In the event the planner attempts to draw from an empty bucket, sample points are immediately drawn from a simple prior distribution $\bb{q}\sim\mathcal{Q}_{prior}$, reverting back to a standard sampling-based planner. Therefore, the main thread does not need to do a blocking-wait on the background thread; which implies that, in the rare event of degraded GPUs performance, PDMP will only be reverting back to the typical planner performance.

\begin{shapedFig}
    {\includegraphics[width=\textwidth]{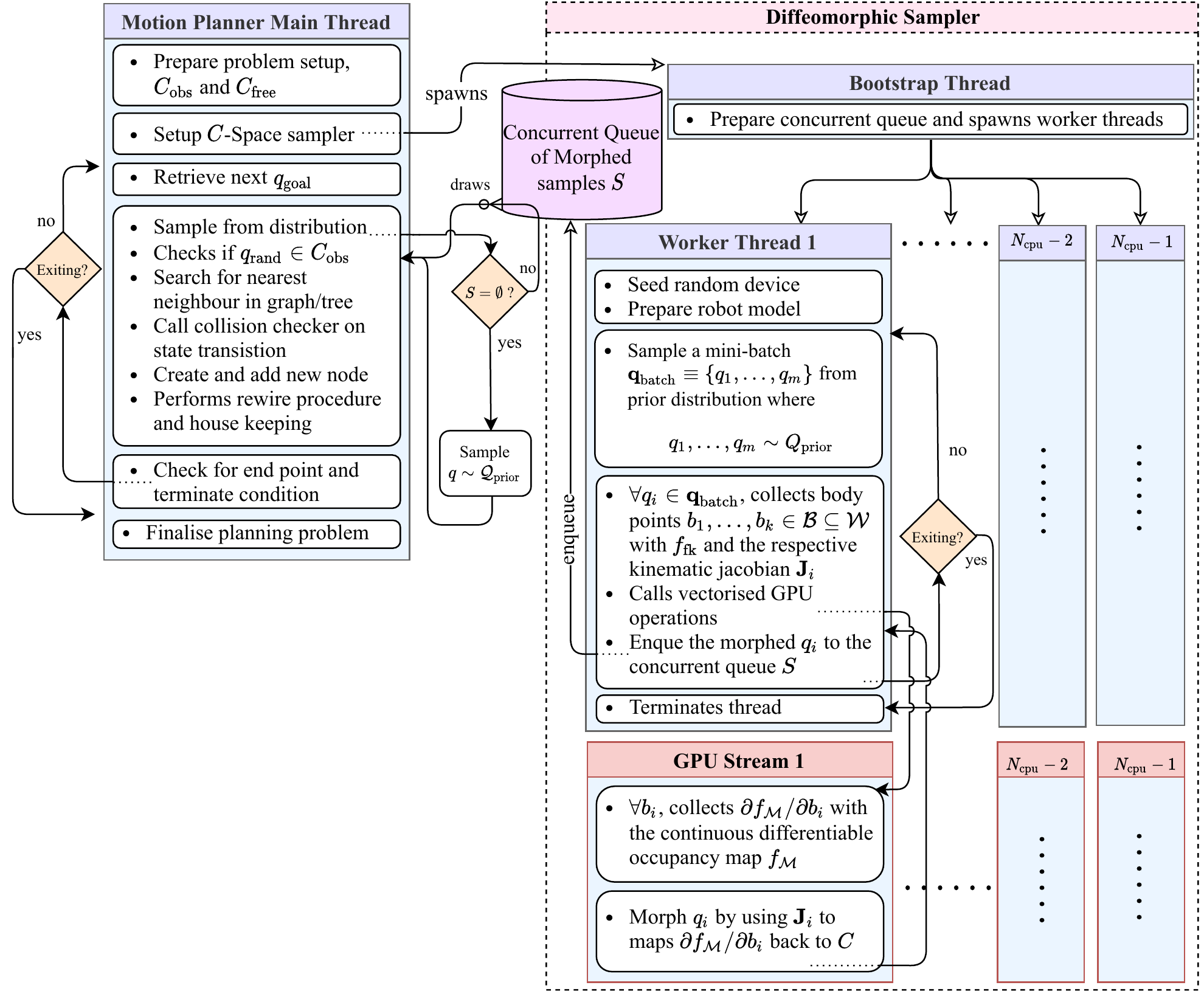}}
    {Flow diagram of the parallelised operations in PDMP.\label{rrt_building}}
\textbf{Diffeomorphic Sampler}
The diffeomorphic sampler leverages parallelism on the GPU to provide more informed samples. When the \emph{bootstrap thread} is created, it first spawns multiple background threads that are equal to the target computer's parallelisation power (e.g. the number of CPUs or hyper-threads). Each background thread is in-charge of generating sample points from the morphed distribution in a mini-batch fashion. In contrast to CPUs, we can use the GPU to sample a large batch from a prior distribution. These backgroud threads within \emph{diffeomorphic sampler} in~\cref{rrt_building} will also collect the necessary kinematic Jacobians into batches for forward pass in the GPUs. The batch of samples is then passed through the diffeomorphism to obtain the informed samples. The pass through the diffeomoprhism can be done efficiently when leveraging the parallel computing capabilities of the GPUs, if the gradients of the cost potential can be done in batch. This is often the case if the cost gradients can be expressed analytically. This is particularly the case, if given by the derivative of a neural network.

   \end{shapedFig}

\subsection{Probabilistic Completeness}
We shall demonstrate that drawing sample points from the transformed distribution maintains the probabilistic-completeness of the popular RRT-based sampling-based method.

We shall begin by considering the support of the prior distribution and the transformed distribution. Let $P_y$ be the prior probability measure on space $Y$, with the diffeomorphism $F:Y\rightarrow Z$. Let $P_z(\bb{z}):=P_y(F^{-1}(\bb{z}))$, with $\bb{z}\in Z$

\begin{definition}[Support of probability measure]
Let $P_{y}$ be a measure on a topological space Y, then the support of $P_{y}$ is the set, $suppP_{y}:=\{\bb{y}\in Y\lvert P_y(\bb{y})>0\}$, 
\end{definition}

Intuitively, the support of a probability distribution is the set of possible values of a random variable having non-zeros probability density. We shall study how the support of the prior and the transformed distribution change depending on $F$. In particular we consider when the support of the prior and transformed distributions are equal.

\begin{lemma}[\cite{Cornish2020RelaxingBC} Equation 4]\label{supportLemme}
The support of the prior and transformed probability measure are equal, i.e. $suppP_{y}=suppP_{z}$, if $Y$ and $Z$ are \emph{homeomorphic} i.e., isomorphic as topological spaces.
\end{lemma}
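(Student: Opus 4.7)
The plan is to work directly from the pushforward definition $P_z(\bb{z}) = P_y(F^{-1}(\bb{z}))$ and show that the homeomorphism $F$ establishes a bijective correspondence between the supports, from which equality follows once $Y$ and $Z$ are identified as topological spaces via $F$. Since the setup already provides a diffeomorphism $F:Y\rightarrow Z$, the homeomorphism hypothesis buys us exactly the ingredient we need: both $F$ and $F^{-1}$ are continuous, so open neighbourhoods are transported bijectively in either direction.

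First, I would prove the inclusion $F(\text{supp}\,P_y) \subseteq \text{supp}\,P_z$. Take $\bb{y}\in\text{supp}\,P_y$, and let $V$ be any open neighbourhood of $F(\bb{y})$ in $Z$. By continuity of $F$, the preimage $F^{-1}(V)$ is an open neighbourhood of $\bb{y}$ in $Y$, and by definition of support it has positive measure under $P_y$. The pushforward relation then gives $P_z(V) = P_y(F^{-1}(V)) > 0$, so $F(\bb{y}) \in \text{supp}\,P_z$. The reverse inclusion is symmetric, swapping the roles of $F$ and $F^{-1}$ and using continuity of $F^{-1}$. This establishes $F(\text{supp}\,P_y)=\text{supp}\,P_z$, and because $F$ is the topological identification between $Y$ and $Z$, the two supports coincide as subsets under this identification. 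In particular, when the base distribution has full support on $Y$ (the typical case for Gaussians or uniform priors), we conclude $\text{supp}\,P_z = F(Y) = Z$.

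The main obstacle I anticipate is reconciling the pointwise definition of support given in the paper, $\text{supp}\,P_y:=\{\bb{y}\in Y\mid P_y(\bb{y})>0\}$, with the correct topological definition that the argument actually uses; for continuous distributions the pointwise mass is zero everywhere, so the definition needs to be read as ``every open neighbourhood of $\bb{y}$ carries positive measure.'' Once this reading is adopted, the remainder is essentially bookkeeping: homeomorphisms preserve the lattice of open sets, pushforwards preserve measure along preimages, and support is a purely topological-plus-measure notion built from these two ingredients. No additional smoothness beyond continuity of $F$ and $F^{-1}$ is used, which is why the result cleanly extends from the diffeomorphic setting of \cref{Cost_inf} to the homeomorphism statement above, and why it suffices for the probabilistic-completeness argument that follows.
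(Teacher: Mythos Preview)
The paper does not supply its own proof of this lemma; it is quoted directly from \cite{Cornish2020RelaxingBC} and immediately applied. There is therefore nothing to compare against, but your argument is the standard one and is correct: continuity of $F$ pushes open neighbourhoods of $\bb{y}$ back to open neighbourhoods of $F^{-1}(\bb{y})$, the pushforward identity $P_z(V)=P_y(F^{-1}(V))$ transfers positivity, and the symmetric argument via $F^{-1}$ gives the reverse inclusion. Your observation that the paper's pointwise definition of support must be read as the usual topological one (every open neighbourhood has positive measure) is exactly right and is the only substantive point that needs flagging; with that reading in place the proof goes through without further difficulty.
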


As $F:Y\rightarrow Z$ is a \emph{diffeomorphism}, that is a differentiable homeomophism, $suppP_{y}=suppP_{z}$.

\begin{theorem}[Probabilistic Completeness]
If a RRT-algorithm, drawing samples from random variable $\bb{y}$, is probabilistic complete, then the RRT-algorithm drawing samples from $f(\bb{y})$, where $f$ is a diffeomorphism, is also probabilistic complete. 
\end{theorem}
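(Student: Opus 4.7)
The plan is to reduce probabilistic completeness under the transformed distribution to the standard sufficient condition that every nonempty open subset of the free configuration space is sampled with strictly positive probability, and then to transfer this property from the prior to its pushforward using \cref{supportLemme}. Since the preceding discussion already hands us support preservation under a diffeomorphism, essentially all the work of the proof is packaged there, and the theorem reduces to a careful bookkeeping step.

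First, I would recall the standard criterion (as used in the RRT completeness results cited in the related works): an RRT-style algorithm is probabilistically complete provided that (i) the sampling distribution assigns positive probability to every nonempty open ball intersecting $\mathcal{Q}_\text{free}$, and (ii) the local steering/connect routine is unchanged. By assumption, the algorithm driven by samples $\bb{y}\sim P_y$ is probabilistically complete, so $P_y$ must satisfy (i). Then I would observe that $f:Y\to Z$, being a diffeomorphism, is in particular a homeomorphism, so $f$ maps open sets to open sets and preimages of open sets are open. Consequently, for any nonempty open $V\subseteq Z$, the preimage $f^{-1}(V)$ is a nonempty open set in $Y$, so
\begin{equation*}
P_z(V) \;=\; P_y\bigl(f^{-1}(V)\bigr) \;>\; 0,
\end{equation*}
which is exactly condition (i) for the transformed distribution $P_z$. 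Invoking \cref{supportLemme} packages this as $\operatorname{supp} P_y = \operatorname{supp} P_z$, so the transformed samples explore the same free-space regions as the prior with positive probability.

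Finally, applying the same probabilistic-completeness criterion to the algorithm driven by $f(\bb{y})\sim P_z$ yields the conclusion. The main obstacle, if any, is conceptual rather than computational: one must be careful that $f$ is interpreted as a pushforward reshaping the sampling density over the C-space, so that the planner's notion of $\mathcal{Q}_\text{free}$ and the local connect routine are unchanged between the two algorithms. Once this identification is in place, no analysis specific to the RRT data structure is needed; the result follows directly from the open-map property of diffeomorphisms together with \cref{supportLemme}.
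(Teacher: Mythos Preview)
Your proposal is correct and follows essentially the same approach as the paper: both hinge on \cref{supportLemme} to conclude that $P_y$ and $P_z$ have identical support, and then invoke a standard RRT completeness criterion. The only difference is presentational: the paper argues (somewhat informally) that equal supports force the asymptotic vertex sets to coincide and then cites Theorem~23 of Karaman and Frazzoli, whereas you unpack the same implication more explicitly via the open-map property and the positive-probability-on-open-sets condition.
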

\begin{proof}
As random variables $\bb{y}$ and $f(\bb{y})$ are linked by diffeomorphism $f$, by \cref{supportLemme}, they have the same support. Clearly, as the sampling time $t\rightarrow +\infty$, the set of created vertices with the sampling distribution $\bb{y}$, $V(\bb{y})$, and set of created vertices with the sampling distribution $f(\bb{y})$, $V(f(\bb{y}))$, are equal. Then by Theorem 23 from \cite{Karaman}, the probabilistic completeness follows directly from the probabilistic completeness of RRT with sampling distribution $\bb{y}$.  
\end{proof}

We shall note that the above theorem proves that, for samples that are drawn from the morphed distribution, there are non-zero probability regions that they will cover \emph{everywhere} in the space. However, certain parts of the space can be stretched arbitrarily thin such that it might reduce the quality of the planner (although still remaining complete). One can deploy a strategy such as sampling with epsilon-bias towards a uniform distribution to circumvent the possibility of a highly skewed distribution. The following experiments utilise a purely morphed distribution and do not employ such a strategy.

\section{Experimental Results}
We empirically analyse our proposed Parallel Diffeomorphic Sampling-based Motion Planning (PDMP) method. In the following sections, we investigate the performance of finding valid motion plans, with gradients from a cost potential occupancy representations, under time constraints. For our simulated environments, we construct three challenging environments, as illustrated in \cref{fig:sim-env}. \emph{Divider}: consists of a large divider on a cluttered table, planning to reach the other  sides of the divider; \emph{Cupboard}: consists of a cupboard where the arm move in-between different shelves; \emph{Lab-setup}: where the arm pickup an object and place it at the bottom of a cluttered scene.

\begin{table}[b]
\begin{minipage}[b]{0.58\linewidth}
\centering
    \caption{
        Numerical results on (i) the total number of \emph{samples}
        and (ii) percentage of \emph{feasibility} on the original (Ori.) and PDMP distributions.
        The table illustrates that PDMP produces more feasible samples in all 3 environments.
        In the PDMP section, the \emph{(from Prior)} and \emph{(from Morphed)} breaks down the \textbf{Total} into samples that are from prior and morphed respectively.
        ($\mu\pm\sigma$)
        \label{table:samp-dist-total-pts-and-feasible-pts}
    }
    \resizebox{\linewidth}{!}{%
    \begin{tabular}{@{}ccccc@{}}
    \toprule
     &  & \multicolumn{3}{c}{Environment} \\ \cmidrule(l){3-5} 
     &  & Divider & Cupboard & Lab-setup \\ \midrule
        \parbox[t]{2mm}{\multirow{2}{*}{\rotatebox[origin=c]{90}{Ori.}}}
        & \multicolumn{1}{l|}{\textbf{Total} samples} & 9543 ± 907 & 22606 ± 1375 & 13153 ± 554 \\ \cmidrule(l){2-5} 
     & \multicolumn{1}{l|}{\textbf{Total} feasible} & 52.19 ± 0.41 \% & 12.77 ± 1.12 \% & 48.38 ± 0.86 \% \\ \midrule
        \parbox[t]{2mm}{\multirow{6}{*}{\rotatebox[origin=c]{90}{PDMP}}}
        & \multicolumn{1}{l|}{\textbf{Total} samples} & {10554 ± 751} & {23536 ± 1067} & {12985 ± 716} \\
     & \multicolumn{1}{c|}{\textit{(from Prior)}} & \textit{17 ± 11} & \textit{5322 ± 1278} & \textit{294 ± 120} \\
     & \multicolumn{1}{c|}{\textit{(from Morphed)}} & \textit{10536 ± 748} & \textit{18214 ± 1190} & \textit{12692 ± 736} \\ \cmidrule(l){2-5} 
     & \multicolumn{1}{l|}{\textbf{Total} feasible} & \textbf{81.52 ± 1.22 \%} & \textbf{31.96 ± 2.23 \%} & \textbf{72.83 ± 2.10 \%} \\
     & \multicolumn{1}{c|}{\textit{(from Prior)}} & \textit{51.02 ± 0.85 \%} & \textit{12.98 ± 1.92 \%} & \textit{49.12 ± 0.51 \%} \\
     & \multicolumn{1}{c|}{\textit{(from Morphed)}} & \textit{85.17 ± 1.74 \%} & \textit{37.51 ± 2.09 \%} & \textit{73.35 ± 1.82 \%} \\ \bottomrule
    \end{tabular}
    }
\end{minipage}\hfill
\begin{minipage}[t]{0.4\linewidth}
    \centering
    \includegraphics[width=\linewidth]{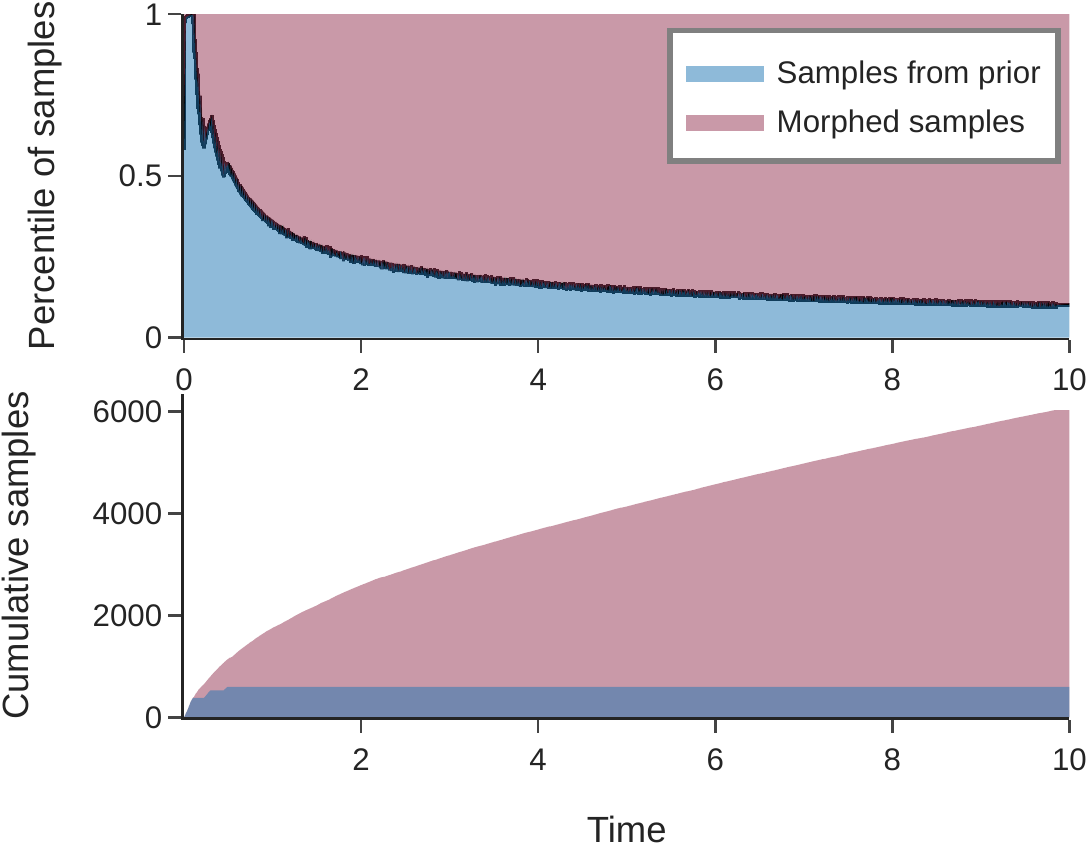}
    \captionof{figure}{
        Percentage and cumulative counts of sample points drawn from an \emph{uninformed prior}, and from a \emph{transformed distribution}.
        \label{fig:pct-of-morphed}
    }
\end{minipage}
\end{table}

\subsection{Qualitative Evaluation on Informed Distribution}
We hypothesise that after morphing our sampling distribution with a cost potential from our continuous occupancy representation, we can significantly improve the performance of the sampling-based planning strategies. 
In \cref{table:samp-dist-total-pts-and-feasible-pts} we evaluate the effect of PDMP on the sampling distribution, both quantitatively and in terms of feasibility.
The results for PDMP in~\cref{table:samp-dist-total-pts-and-feasible-pts} are broken down into samples that are contributed by the original uninformed prior distribution (top half) and by the informed diffeomorphic distribution (bottom half).
The are no relative differences between the original and PDMP distribution in \emph{total samples}, this means that PDMP does not slow down the drawing of random samples.
Instead, the morphed samples are more beneficial to the planning problem as they are more likely to be feasible in free space, as shown by the higher \emph{total feasible} percentage in the PDMP section.

\begin{figure}[t]
    \begin{minipage}{0.57\linewidth}
    \captionof{table}{
        Numerical results on various sampling-based motion planners on each environment.
        The \emph{time-to-solution} refers to the time it took to obtain a solution trajectory (in seconds); and the \emph{success pct.} refers to the percentage of runs that had successfully found a solution.
        Each SBP has a corresponding \emph{PDMP} variant.
        Results are over 30 runs and with a time budget of 20 seconds.
        Results shown are mean $\pm$ one standard deviation ($\mu\pm\sigma$).
        \label{table:planners-result}
    }
    \resizebox{\linewidth}{!}{%
        \begin{tabular}{@{}ccccc@{}}
        \toprule
        \multirow{2}{*}{}           & \multirow{2}{*}{Planner}         & \multicolumn{3}{c}{Environment}                                    \\ \cmidrule(l){3-5} 
                                          &                                  & Divider              & Cupboard             & Lab-setup            \\ \midrule
        \parbox[t]{2mm}{\multirow{7}{*}{\rotatebox[origin=c]{90}{Time-to-solution}}}
                                          & \multicolumn{1}{c|}{RRT*}        & 13.83 ± 7.72         & 18.79 ± 4.53         & 12.56 ± 7.78         \\
                                          & \multicolumn{1}{c|}{PDMP-RRT*}   & 5.99 ± 3.98          & 17.01 ± 6.14         & 9.26 ± 8.37          \\ \cmidrule(l){2-5} 
                                          & \multicolumn{1}{c|}{RRT.C*}      & 2.72 ± 0.91          & 5.51 ± 7.38          & 1.60 ± 0.63          \\
                                          & \multicolumn{1}{c|}{PDMP-RRT.C*} & \textbf{1.60 ± 1.26} & \textbf{3.64 ± 5.76} & \textbf{1.29 ± 0.44} \\ \cmidrule(l){2-5} 
                                          & \multicolumn{1}{c|}{L.PRM*}      & 14.98 ± 7.96         & N/A                  & 16.57 ± 6.93         \\
                                          & \multicolumn{1}{c|}{PDMP-L.PRM*} & 13.37 ± 7.44         & N/A                  & 14.71 ± 7.07         \\ \cmidrule(l){2-5} 
                                          & \multicolumn{1}{c|}{STOMP}       & 14.83 ± 4.94 & 14.54 ± 5.90 & 14.29 ± 6.32 \\ \midrule
        \parbox[t]{2mm}{\multirow{7}{*}{\rotatebox[origin=c]{90}{Success pct.}}}
                                          & \multicolumn{1}{c|}{PDMP-RRT*}   & 43.33 \%             & 6.67 \%              & 53.33 \%             \\
                                          & \multicolumn{1}{c|}{PDMP-RRT*}   & 96.67 \%             & 20.00 \%             & 70.00 \%             \\ \cmidrule(l){2-5} 
                                          & \multicolumn{1}{c|}{RRT.C*}      & \textbf{100 \%}      & 83.33 \%             & \textbf{100 \%}      \\
                                          & \multicolumn{1}{c|}{PDMP-RRT.C*} & \textbf{100 \%}      & \textbf{93.33 \%}    & \textbf{100 \%}      \\ \cmidrule(l){2-5} 
                                          & \multicolumn{1}{c|}{L.PRM*}      & 33.33 \%             & 0 \%                 & 20.00 \%             \\
                                          & \multicolumn{1}{c|}{PDMP-L.PRM*} & 43.33 \%             & 0 \%                 & 46.67 \%             \\ \cmidrule(l){2-5} 
                                          & \multicolumn{1}{c|}{STOMP}       & 63.33 \% & 53.33 \% & 53.33 \% \\ \bottomrule
        \end{tabular}
    }
    \end{minipage}\hfill%
\begin{minipage}{0.41\linewidth}
        \centering
            \centering
            \includegraphics[width=.28\linewidth,frame]{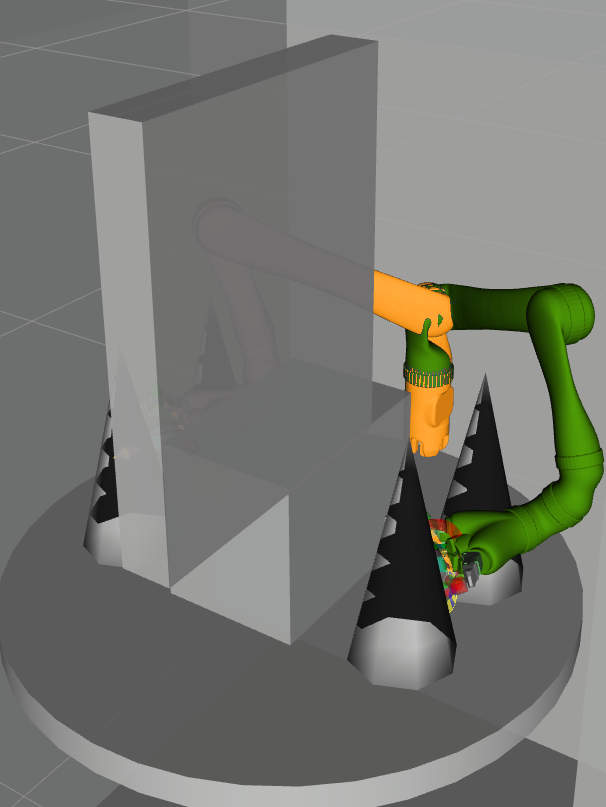}
            \hspace{0.01cm}
            \includegraphics[width=.28\linewidth,frame]{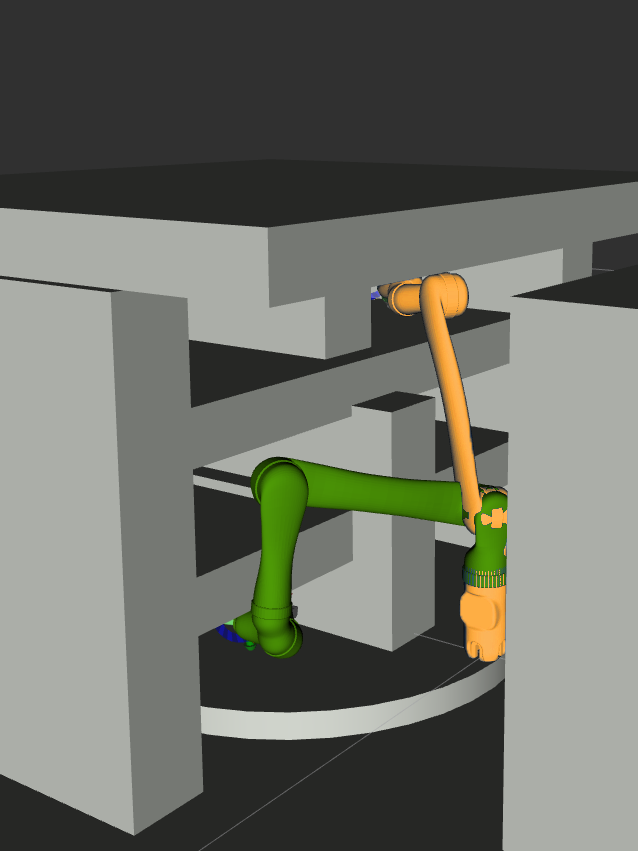}
            \hspace{0.01cm}
            \includegraphics[width=.28\linewidth,frame]{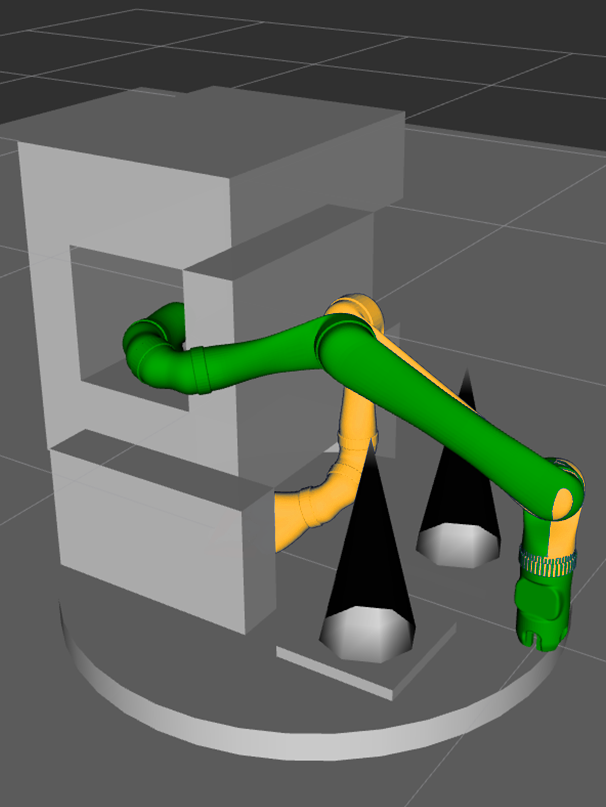}
            \captionof{figure}{
                Left to right are environments of \emph{divider}, \emph{cupboard} and \emph{lab-setup}.
                \label{fig:sim-env}
                }
    \frame{
        \begin{subfigure}{\linewidth}
            \centering
            \includegraphics[width=.32\linewidth]{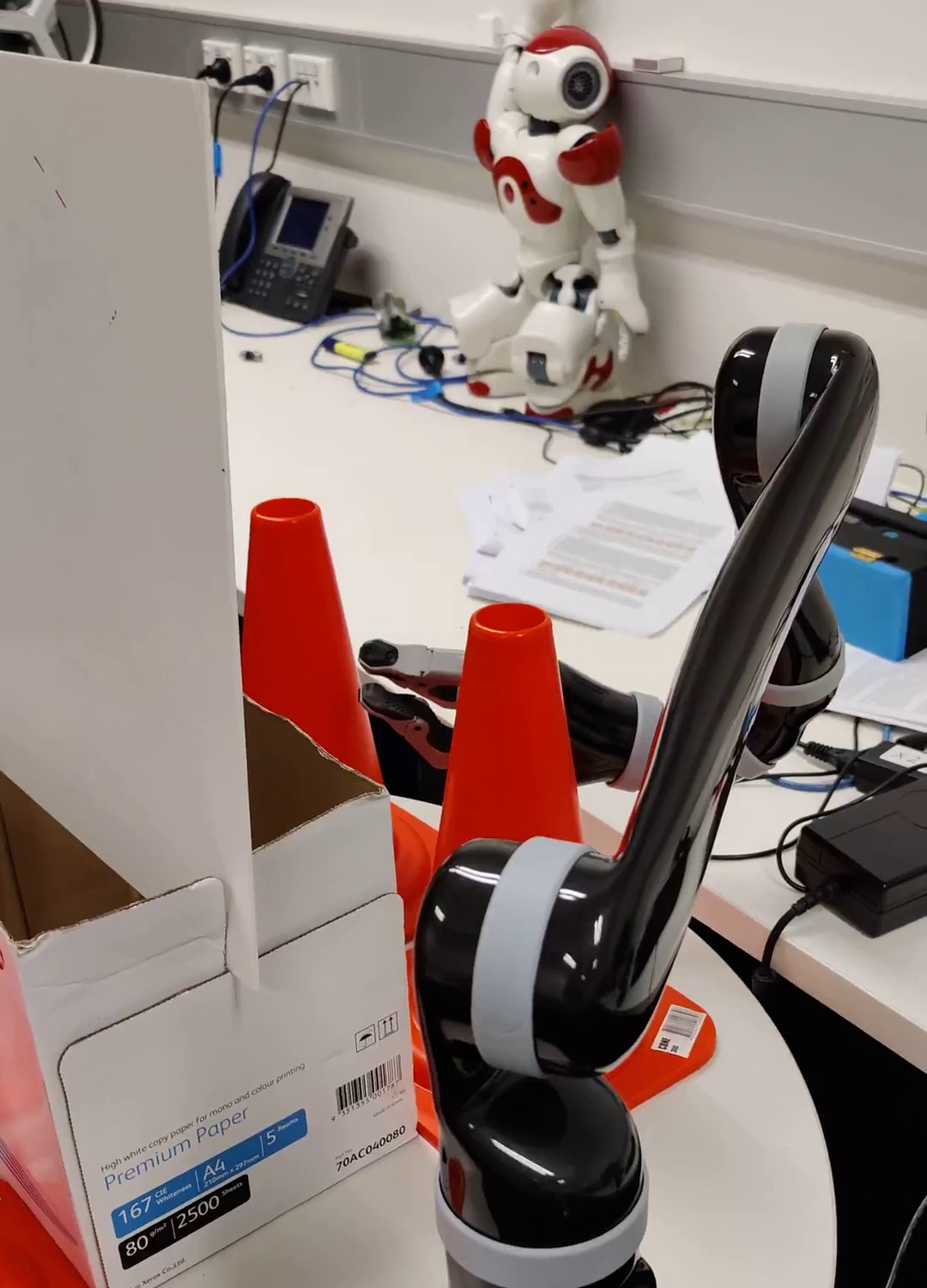}
            \hfill
            \includegraphics[width=.32\linewidth]{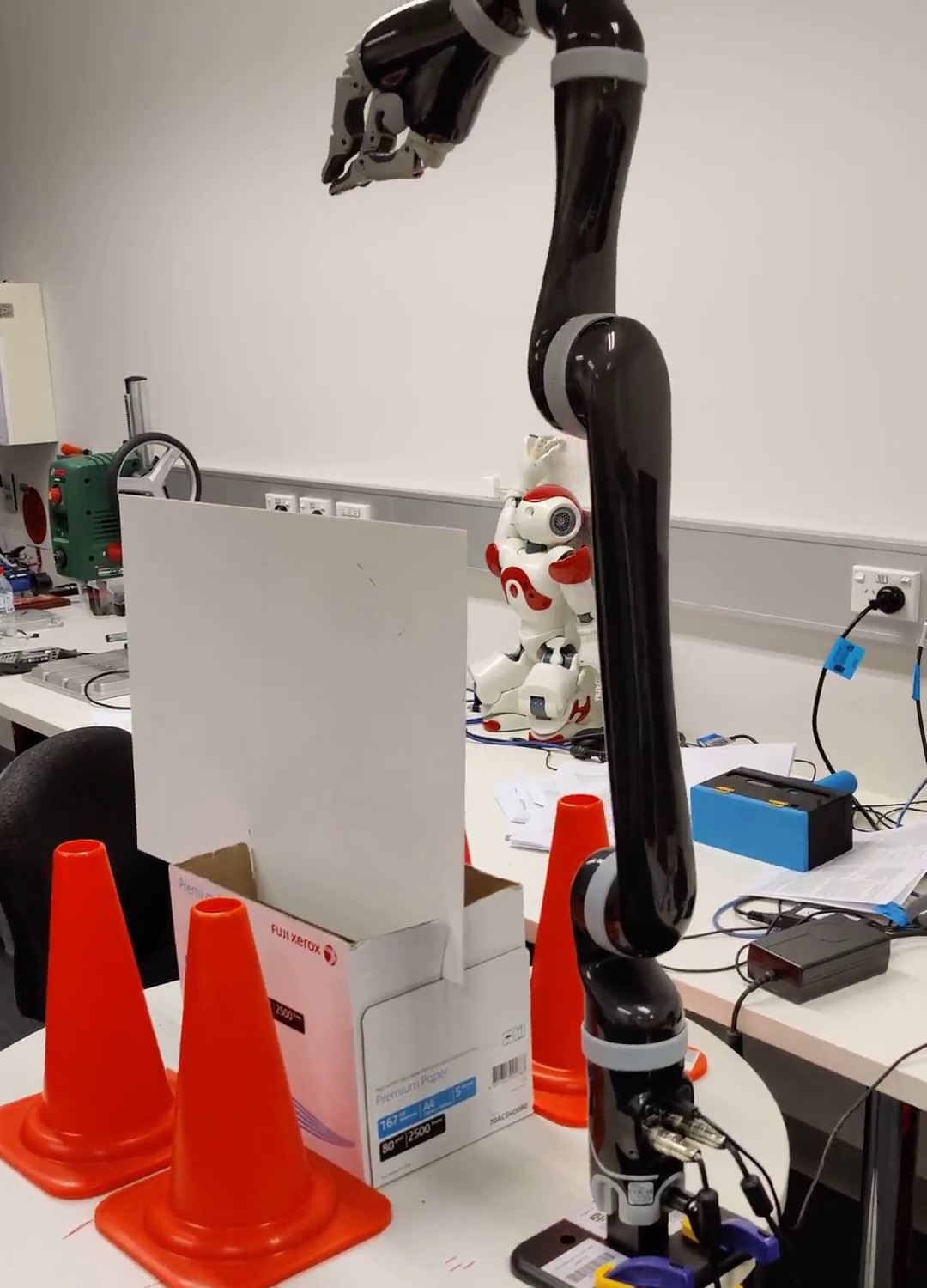}
            \hfill
            \includegraphics[width=.32\linewidth]{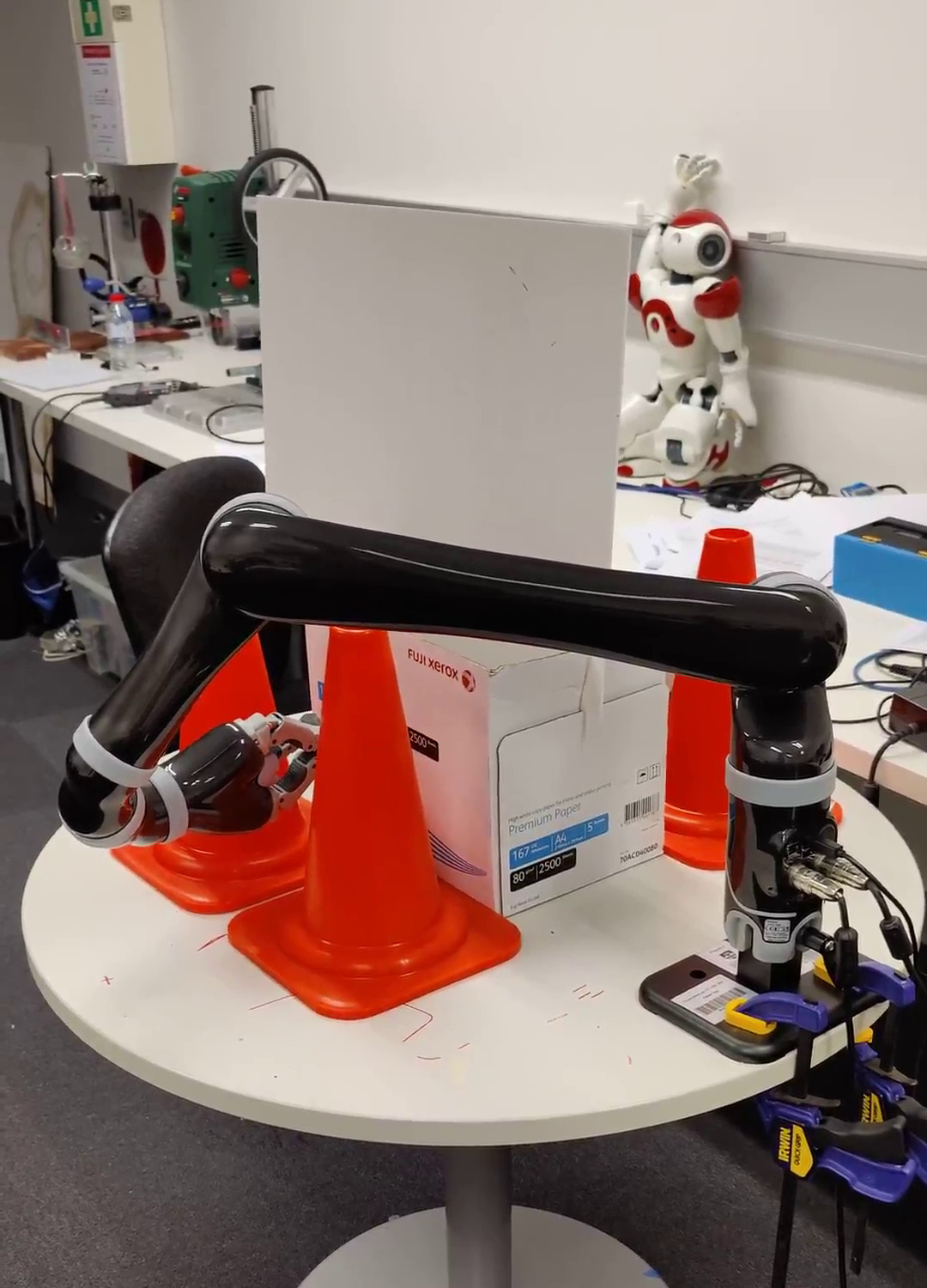}
        \end{subfigure}%
        }
        
        \vspace{.2cm}
        
        \frame{
        \begin{subfigure}{\linewidth}
            \centering
            \includegraphics[width=.32\linewidth]{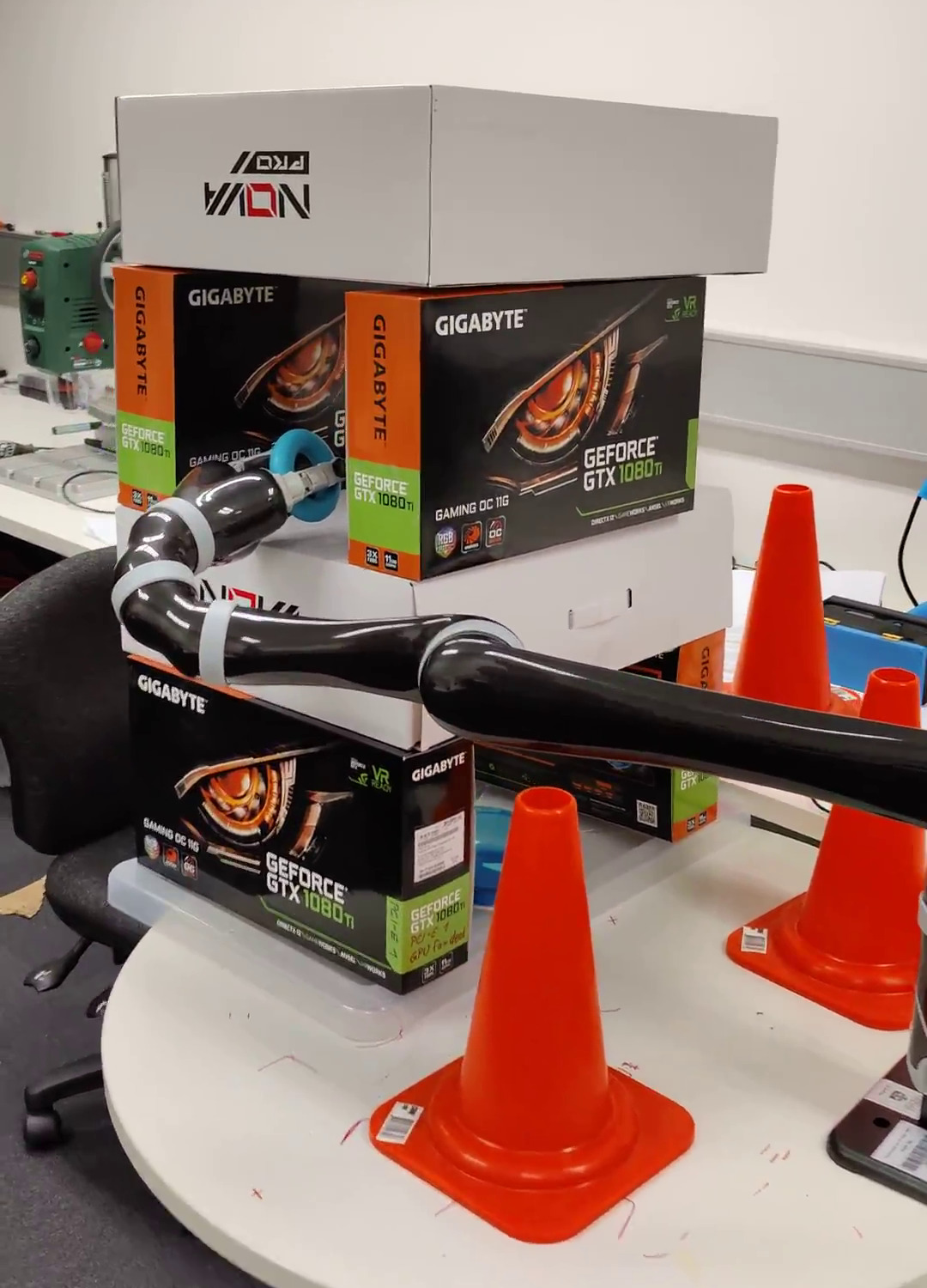}
            \hfill
            \includegraphics[width=.32\linewidth]{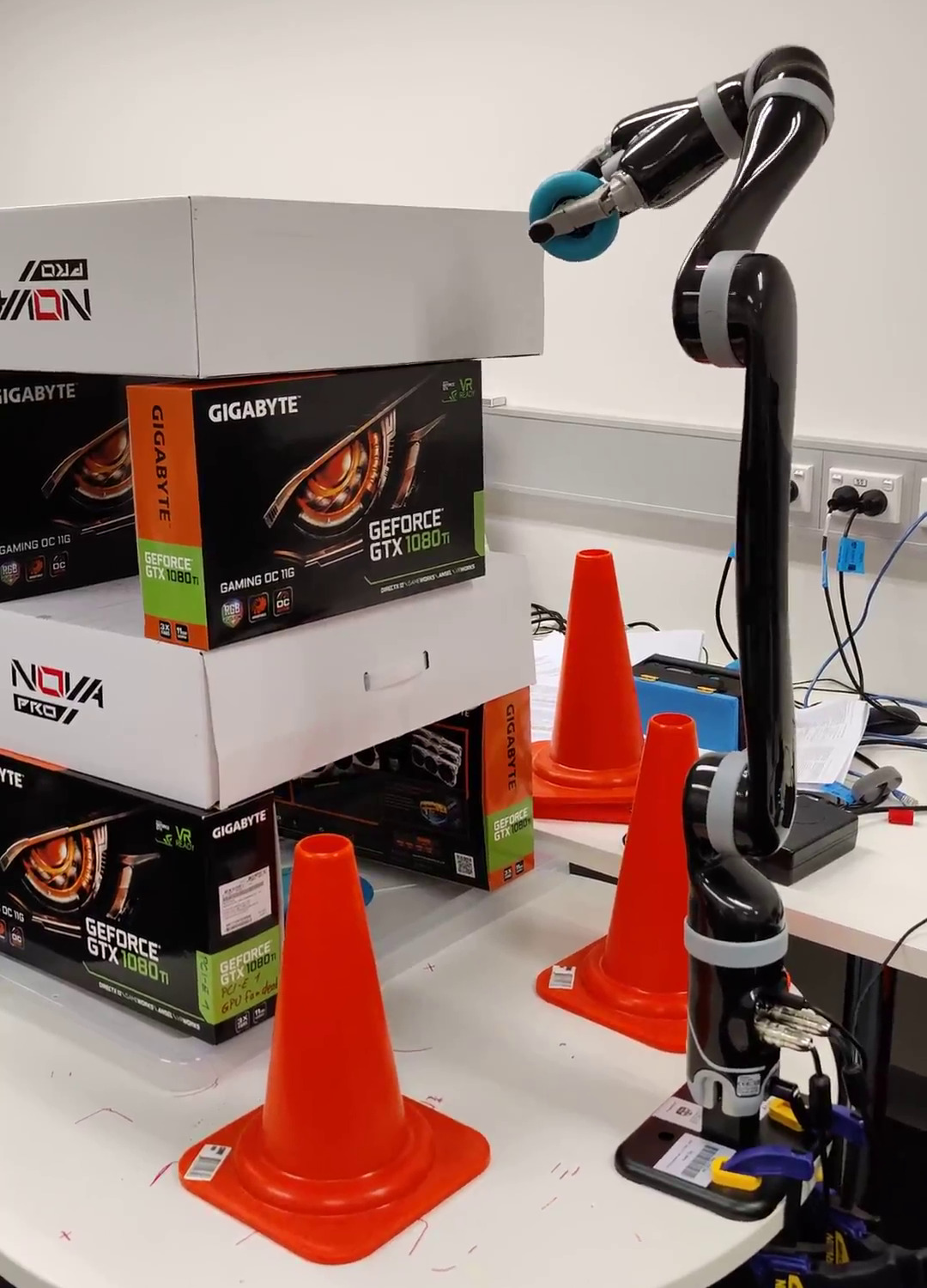}
            \hfill
            \includegraphics[width=.32\linewidth]{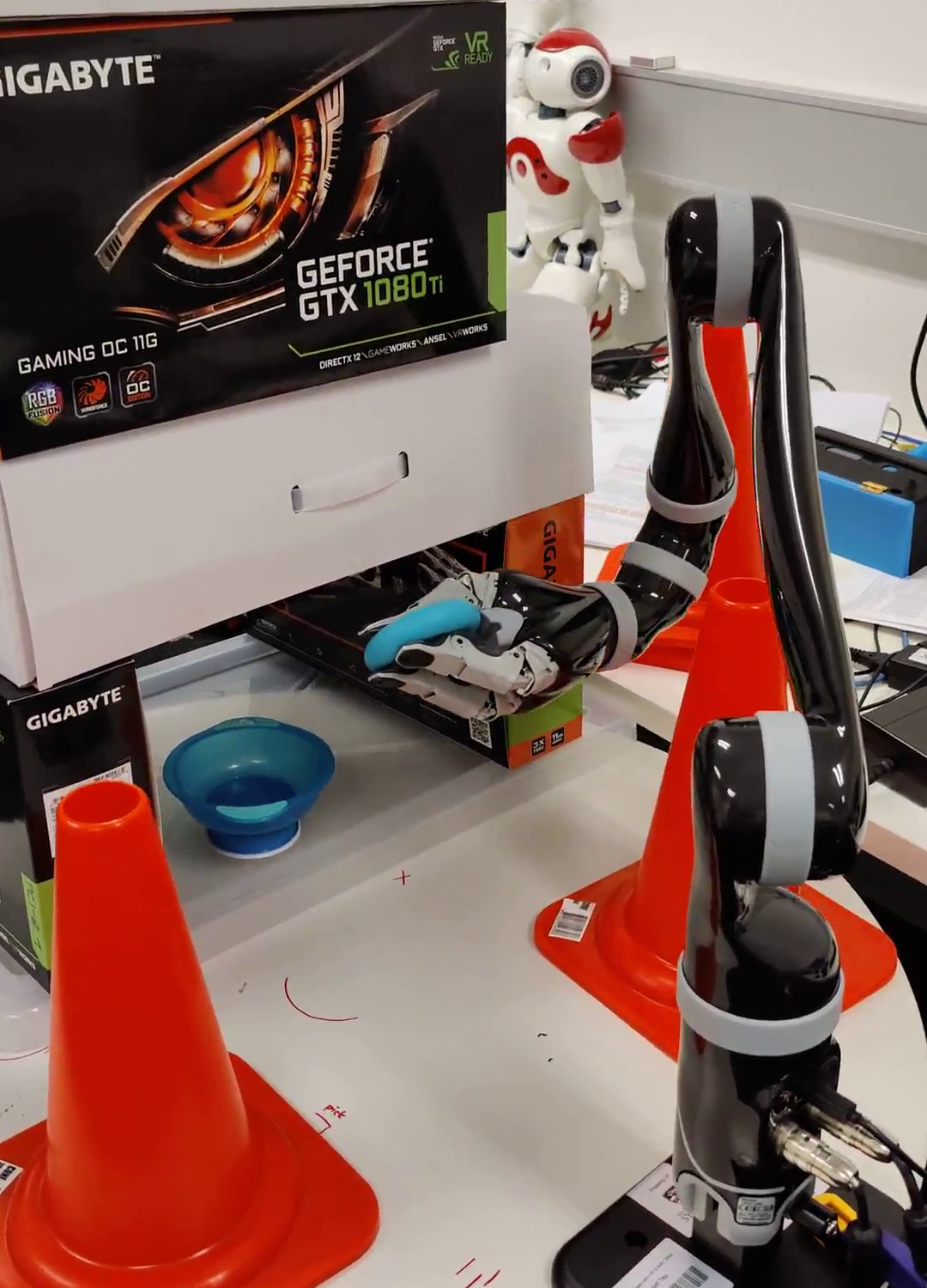}
        \end{subfigure}%
        }
        \caption{
            Sequence of trajectory in the real-world experiments with the Jaco arm: (Top) \emph{Divider}; (Bottom) \emph{Lab-setup}.
            \label{fig:real-robot}
            }
    \end{minipage}
\end{figure}

\subsection{Higher Success Rates with Informed Sampling Distributions}
We examine our hypothesis by testing various sampling-based motion planners (SBPs) within our PDMP framework.
We investigate three SBPs---RRT* \cite{karaman2010_IncrSamp}, RRT*-connect \cite{Klemm2015}, Lazy-PRM* \cite{Bohlin2000}, and a trajectory optimisation-based planner STOMP \cite{kalakrishnan2011_STOMStoc}. For the SBPs, we compare the effect of morphing the sampling distribution provided by our approach to standard uninformed sampling. 

We provide a time-budget of 20 seconds for each planner, and calculate the percentage of tries, over 30 runs, that result in a successful plan at different times until the budget of 20s was entirely used. The results are illustrated in \cref{fig:success-rate}. We see that for each of the three sampling-based planning methods, incorporation within the PDMP framework to morph the sampling distribution improves the success rate. This is most evident when using RRTs within PDMP, since it allows us to produce more successful samples which in-turn speed up the tree-building process. 
We replicate the \emph{divider} and \emph{Lab-setup} environment (\cref{fig:real-robot}) in the real-world with a 6-DOF Jaco manipulator. The planning is illustrated by videos included in the supplementary materials.

\Cref{table:planners-result} illustrates numerical results of two important attributes in motion planning---the \emph{time-to-solution} and the \emph{success percentage}.
Overall, PDMP allows each motion planners to utilise sampled configurations that are more likely to be feasible (see~\cref{table:samp-dist-total-pts-and-feasible-pts}), which in turn allow PDMP planners to achieve shorter \emph{time-to-solution} when compared to their original counterpart in~\cref{table:planners-result}.
Therefore, they are also more likely to successfully obtain a solution trajectory within the allocated time budget.

In the \emph{divider} environment, we see that the success rate of PDMP-RRT* reaches $80\%$ at around 7 seconds of planning, and almost has a perfect success rate by the end of the 20s budget. On the other hand, vanilla RRT* with a uniform sampling distribution has a success rate of under $50\%$ when the time budget is used up. The same trends are observed with the other variants. Overall, the RRT*-connect within PDMP and with a uniform sampling distribution outperforms the other variants. Even still, when using RRT-connect within the PDMP, we observe higher success rates when the planning time is low (under 3 seconds), indicating that PDMP significantly improves time-to-solution.
The time-to-solution of STOMP tends to spread out among all three environments, which is likely due to its stochastic nature. The performance of STOMP does not seem to be degraded by the complexity of the environment, which suggests that the cost information was able to guide STOMP to obtain a solution trajectory.
Our PDMP framework provides clear imporvements to the success rates of both RRT and RRT-connect methods. Additionally, we observe that at the end of the 20s time budget, PDMP variants outperform their counterparts which sample from an uninformed sampling distribution. Lastly, Lazy-PRM* performs poorly on most environments as it is a multi-query planner, and was not able to obtain any valid solution in \emph{Cupboard} within the allocated time budget.

\subsection{Influence of CPU-GPU Parallelisation}

Our PDMP method parallelises over the CPU and GPU, by allocating the GPU to filling up a bucket for which samples are drawn, and dedicating the CPU to the planning process, which typically involves building a tree. If the bucket is empty when a sample is needed for the sequential planning process on the CPU, a sample is drawn from an uninformed prior distribution. Therefore, obtaining informed samples come at almost no additional cost: in the worst-case scenario, if the planning process faster than drawing samples from the morphed distribution, PDMP falls back to a vanilla sampling based algorithm, drawing samples from a uninformed distribution.

Intuitively, the more samples obtained from the bucket used in the planning process, the more informed our used samples are. We investigate the number of sample points drawn from the bucket, which are from the ``morphed'' distribution, and the percentage of samples from the uninformed prior as planning time progresses. This is shown in \cref{fig:pct-of-morphed}. We observe at the beginning, as the bucket has not yet been filled with samples from the morphed sampling distribution, samples from the uninformed prior are used. However, the GPU is able to quickly fill up the bucket with samples from the morphed distribution, and the number of uninformed samples beyond 0.2s is largely negligible. By 1 second of sampling time, the cumulative ``morphed'' sample points significantly exceeds the cumulative uninformed samples. This indicates that at any reasonable amount of planning time, the process of drawing samples from an informed distribution is much faster than the main planning process.

\begin{figure}[t]
    \includegraphics[width=\linewidth]{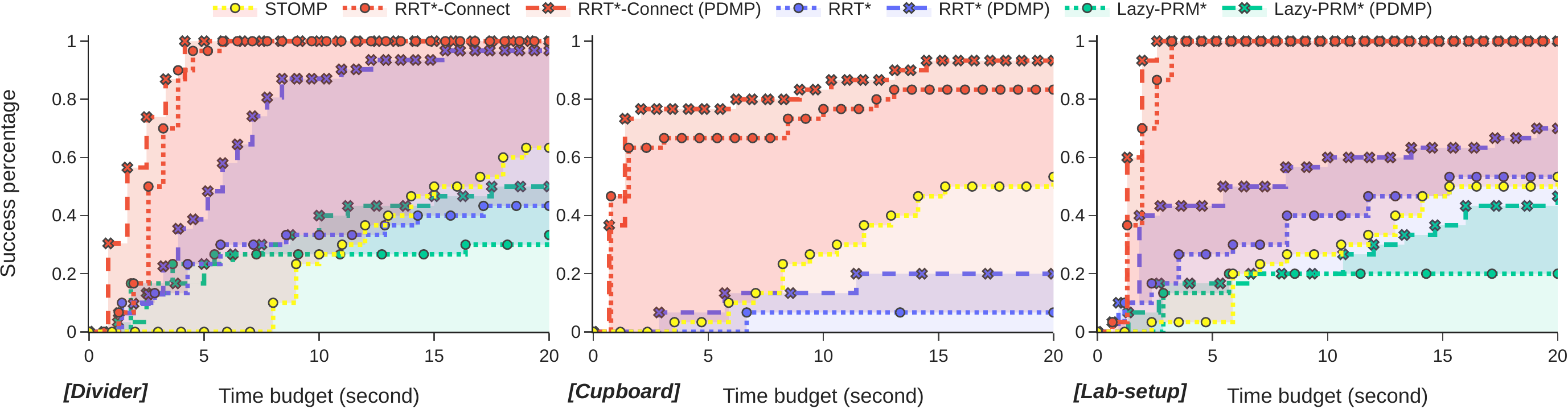}
    \caption{The success rate of the planning algorithm variants, over 30 runs. We observe that PDMP enables all flavours of sampling-based motion planning algorithms to have improved success rates, particularly at lower planning times.}
    \label{fig:success-rate}
\end{figure}

\section{Conclusions}
In this paper a novel method combining cost gradients from optimisation-based motion planning with probabilistic-complete sampling-base motion planning methods is proposed. Parallelised Diffeomorphic Sampling-based Motion Planning is a motion-planning framework which utilises \emph{diffeomorphisms} generated from gradients of defined cost to morph the sampling distribution for sampling-based motion-planning methods. We demonstrate how such diffeomorphisms can be created from learned models of environment occupancy to encode obstacle avoidance behaviour, or user specified biases. Additionally, an implementation which parallelises this process across the GPU and CPU is provided, showing that sampling from the more informed distribution can be achieved at no additional run-time cost. We empirically demonstrate that our method is capable of significantly improving the success rate of finding solutions in challenging planning environments.

\bibliography{Zotero}

\begin{thebibliography}{29}
\providecommand{\natexlab}[1]{#1}
\providecommand{\url}[1]{\texttt{#1}}
\expandafter\ifx\csname urlstyle\endcsname\relax
  \providecommand{\doi}[1]{doi: #1}\else
  \providecommand{\doi}{doi: \begingroup \urlstyle{rm}\Url}\fi

\bibitem[Kavraki et~al.(1996)Kavraki, Svestka, Latombe, and
  Overmars]{kavraki1996_ProbRoad}
L.~E. Kavraki, P.~Svestka, J.~C. Latombe, and M.~H. Overmars.
\newblock Probabilistic roadmaps for path planning in high-dimensional
  configuration spaces.
\newblock \emph{IEEE Transactions on Robotics and Automation}, 12\penalty0
  (4):\penalty0 566--580, Aug. 1996.
\newblock ISSN 1042-296X.
\newblock \doi{10/fsgth3}.

\bibitem[LaValle(1998)]{lavalle1998_RapiRand}
S.~M. LaValle.
\newblock Rapidly-exploring random trees: {{A}} new tool for path planning.
\newblock \emph{TR 98-11 Comput. Sci. Dept Iowa State Univ.}, 1998.

\bibitem[Lai et~al.(2019)Lai, Ramos, and Francis]{lai2018_BalaGlob}
T.~Lai, F.~Ramos, and G.~Francis.
\newblock Balancing {{Global Exploration}} and {{Local}}-{{Connectivity
  Exploitation}} with {{Rapidly}}-{{Exploring Random}} disjointed-{{Trees}}.
\newblock In \emph{Proceedings of {{The International Conference}} on
  {{Robotics}} and {{Automation}}}. {IEEE}, 2019.
\newblock \doi{10/ghqdw7}.

\bibitem[Bry and Roy(2011)]{bry2011_RapiRand}
A.~Bry and N.~Roy.
\newblock Rapidly-exploring {{Random Belief Trees}} for motion planning under
  uncertainty.
\newblock In \emph{Proceedings of {{IEEE International Conference}} on
  {{Robotics}} and {{Automation}}}, pages 723--730, May 2011.
\newblock \doi{10/b3xc6v}.

\bibitem[Lee et~al.(2012)Lee, Kwon, Zhang, and Yoon]{lee2012_SRRRSele}
J.~Lee, O.~Kwon, L.~Zhang, and S.-e. Yoon.
\newblock {{SR}}-{{RRT}}: {{Selective}} retraction-based {{RRT}} planner.
\newblock In \emph{Proceedings of {{IEEE International Conference}} on
  {{Robotics}} and {{Automation}}}, pages 2543--2550. {IEEE}, 2012.

\bibitem[Lai(2021)]{lai2021_RRF}
T.~Lai.
\newblock Rapidly-exploring random forest: Adaptively exploits local structure
  with generalised multi-trees motion planning, 2021.
\newblock URL \url{http://arxiv.org/abs/2103.04487}.

\bibitem[Gammell et~al.(2018)Gammell, Barfoot, and
  Srinivasa]{gammell2018_InfoSamp}
J.~D. Gammell, T.~D. Barfoot, and S.~S. Srinivasa.
\newblock Informed sampling for asymptotically optimal path planning.
\newblock \emph{IEEE Transactions on Robotics}, 34\penalty0 (4):\penalty0
  966--984, 2018.
\newblock \doi{10/gd5sp3}.

\bibitem[Lai et~al.(2020)Lai, Morere, Ramos, and Francis]{lai2020_BayeLoca}
T.~Lai, P.~Morere, F.~Ramos, and G.~Francis.
\newblock Bayesian {{Local Sampling}}-{{Based Planning}}.
\newblock \emph{IEEE Robotics and Automation Letters}, 5\penalty0 (2):\penalty0
  1954--1961, 2020.
\newblock \doi{10/gg2n24}.

\bibitem[Lai and Ramos(2020)]{lai2020_LearPlan}
T.~Lai and F.~Ramos.
\newblock Learning to {{Plan Optimally}} with {{Flow}}-{{Based Motion
  Planner}}, Oct. 2020.
\newblock URL \url{http://arxiv.org/abs/2010.11323}.

\bibitem[Ichter et~al.(2018)Ichter, Harrison, and Pavone]{ichter2018_LearSamp}
B.~Ichter, J.~Harrison, and M.~Pavone.
\newblock Learning sampling distributions for robot motion planning.
\newblock In \emph{2018 {{IEEE International Conference}} on {{Robotics}} and
  {{Automation}} ({{ICRA}})}, pages 7087--7094. {IEEE}, 2018.
\newblock \doi{10/gf7t7z}.

\bibitem[Sartoretti et~al.(2019)Sartoretti, Kerr, Shi, Wagner, Kumar, Koenig,
  and Choset]{sartoretti2019_PRIMPath}
G.~Sartoretti, J.~Kerr, Y.~Shi, G.~Wagner, T.~S. Kumar, S.~Koenig, and
  H.~Choset.
\newblock {{PRIMAL}}: {{Pathfinding}} via reinforcement and imitation
  multi-agent learning.
\newblock \emph{IEEE Robotics and Automation Letters}, 4\penalty0 (3):\penalty0
  2378--2385, 2019.
\newblock \doi{10/ggzj7h}.

\bibitem[Elbanhawi and Simic(2014)]{elbanhawi2014_SampRobo}
M.~Elbanhawi and M.~Simic.
\newblock Sampling-based robot motion planning: {{A}} review.
\newblock \emph{IEEE Access}, 2:\penalty0 56--77, 2014.
\newblock \doi{10/gdkx6g}.

\bibitem[Ratliff et~al.(2009)Ratliff, Zucker, Bagnell, and
  Srinivasa]{ratliff2009_CHOMGrad}
N.~Ratliff, M.~Zucker, J.~A. Bagnell, and S.~Srinivasa.
\newblock {{CHOMP}}: {{Gradient}} optimization techniques for efficient motion
  planning.
\newblock 2009.
\newblock \doi{10/cxf9b9}.

\bibitem[Kalakrishnan et~al.(2011)Kalakrishnan, Chitta, Theodorou, Pastor, and
  Schaal]{kalakrishnan2011_STOMStoc}
M.~Kalakrishnan, S.~Chitta, E.~Theodorou, P.~Pastor, and S.~Schaal.
\newblock {{STOMP}}: {{Stochastic}} trajectory optimization for motion
  planning.
\newblock In \emph{2011 {{IEEE International Conference}} on {{Robotics}} and
  {{Automation}}}, pages 4569--4574, May 2011.
\newblock \doi{10/fxbq67}.

\bibitem[Schulman et~al.(2014)Schulman, Duan, Ho, Lee, Awwal, Bradlow, Pan,
  Patil, Goldberg, and Abbeel]{schulman2014_MotiPlan}
J.~Schulman, Y.~Duan, J.~Ho, A.~Lee, I.~Awwal, H.~Bradlow, J.~Pan, S.~Patil,
  K.~Goldberg, and P.~Abbeel.
\newblock Motion planning with sequential convex optimization and convex
  collision checking.
\newblock \emph{International Journal of Robotics Research}, 33\penalty0
  (9):\penalty0 1251--1270, 2014.
\newblock \doi{10/f6j8k3}.

\bibitem[Hwang and Ahuja(1992)]{hwang1992_PoteFiel}
Y.~K. Hwang and N.~Ahuja.
\newblock A potential field approach to path planning.
\newblock \emph{IEEE Transactions on Robotics and Automation}, 8\penalty0
  (1):\penalty0 23--32, 1992.
\newblock \doi{10/br5jnz}.

\bibitem[Rezende and Mohamed(2015)]{rezende2015_VariInfe}
D.~Rezende and S.~Mohamed.
\newblock Variational {{Inference}} with {{Normalizing Flows}}.
\newblock In \emph{International {{Conference}} on {{Machine Learning}}}, pages
  1530--1538, June 2015.
\newblock URL \url{http://proceedings.mlr.press/v37/rezende15.html}.

\bibitem[Papamakarios et~al.(2021)Papamakarios, Nalisnick, Rezende, Mohamed,
  and Lakshminarayanan]{flowsJMLR}
G.~Papamakarios, E.~Nalisnick, D.~J. Rezende, S.~Mohamed, and
  B.~Lakshminarayanan.
\newblock Normalizing flows for probabilistic modeling and inference.
\newblock \emph{Journal of Machine Learning Research}, 2021.

\bibitem[Lee(2013)]{Lee2002IntroductionTS}
J.~M. Lee.
\newblock Smooth manifolds.
\newblock In \emph{Introduction to Smooth Manifolds}, pages 1--31. Springer,
  2013.

\bibitem[Ardizzone et~al.(2019)Ardizzone, Kruse, Wirkert, Rahner, Pellegrini,
  Klessen, {Maier-Hein}, Rother, and K{\"o}the]{ardizzone2019analyzing}
L.~Ardizzone, J.~Kruse, S.~Wirkert, D.~Rahner, E.~W. Pellegrini, R.~S. Klessen,
  L.~{Maier-Hein}, C.~Rother, and U.~K{\"o}the.
\newblock Analyzing inverse problems with invertible neural networks.
\newblock In \emph{{{ICLR}}}, 2019.

\bibitem[Durkan et~al.(2019)Durkan, Bekasov, Murray, and
  Papamakarios]{NEURIPS2019_7ac71d43}
C.~Durkan, A.~Bekasov, I.~Murray, and G.~Papamakarios.
\newblock Neural spline flows.
\newblock In H.~Wallach, H.~Larochelle, A.~Beygelzimer, F.~{dAlch{\'e}-Buc},
  E.~Fox, and R.~Garnett, editors, \emph{Advances in Neural Information
  Processing Systems}, 2019.

\bibitem[Ramos and Ott(2016)]{HilbertMaps}
F.~Ramos and L.~Ott.
\newblock Hilbert maps: {{Scalable}} continuous occupancy mapping with
  stochastic gradient descent.
\newblock \emph{The International Journal of Robotics Research}, 2016.
\newblock \doi{10/f9kr6m}.

\bibitem[Park et~al.(2019)Park, Florence, Straub, Newcombe, and
  Lovegrove]{Park2019DeepSDFLC}
J.~J. Park, P.~R. Florence, J.~Straub, R.~A. Newcombe, and S.~Lovegrove.
\newblock {{DeepSDF}}: {{Learning}} continuous signed distance functions for
  shape representation.
\newblock \emph{2019 IEEE/CVF Conference on Computer Vision and Pattern
  Recognition (CVPR)}, 2019.
\newblock \doi{10/gg8ftv}.

\bibitem[Coddington and Levinson(1955)]{Coddington1955TheoryOO}
E.~Coddington and N.~Levinson.
\newblock Theory of ordinary differential equations.
\newblock 1955.

\bibitem[Cornish et~al.(2020)Cornish, Caterini, Deligiannidis, and
  Doucet]{Cornish2020RelaxingBC}
R.~Cornish, A.~L. Caterini, G.~Deligiannidis, and A.~Doucet.
\newblock Relaxing bijectivity constraints with continuously indexed
  normalising flows.
\newblock In \emph{{{ICML}}}, 2020.

\bibitem[Karaman and Frazzoli(2011)]{Karaman}
S.~Karaman and E.~Frazzoli.
\newblock Sampling-based algorithms for optimal motion planning.
\newblock \emph{The International Journal of Robotics Research}, 30\penalty0
  (7):\penalty0 846--894, 2011.
\newblock \doi{10/c2wgw5}.

\bibitem[Karaman and Frazzoli(2010)]{karaman2010_IncrSamp}
S.~Karaman and E.~Frazzoli.
\newblock Incremental {{Sampling}}-{{Based Algorithms}} for {{Optimal Motion
  Planning}}.
\newblock In \emph{Proceedings of {{Robotics}}: {{Science}} and {{Systems}}}.
  {Robotics Science and Systems VI}, June 2010.
\newblock ISBN 978-0-262-51681-5.
\newblock \doi{10/ggrsd4}.

\bibitem[Klemm et~al.(2015)Klemm, Oberl{\"a}nder, Hermann, Roennau, Schamm,
  Zollner, and Dillmann]{Klemm2015}
S.~Klemm, J.~Oberl{\"a}nder, A.~Hermann, A.~Roennau, T.~Schamm, J.~M. Zollner,
  and R.~Dillmann.
\newblock {{RRT}}{${_\ast}$}-{{Connect}}: {{Faster}}, asymptotically optimal
  motion planning.
\newblock In \emph{2015 {{IEEE}} International Conference on Robotics and
  Biomimetics ({{ROBIO}})}, 2015.
\newblock \doi{10/ghpwwm}.

\bibitem[Bohlin and Kavraki(2000)]{Bohlin2000}
R.~Bohlin and L.~Kavraki.
\newblock Path planning using lazy {{PRM}}.
\newblock In \emph{Proceedings 2000 {{ICRA}}. {{Millennium}} Conference.
  {{IEEE}} International Conference on Robotics and Automation. {{Symposia}}
  Proceedings (Cat. {{No}}.{{00CH37065}})}, 2000.
\newblock \doi{10/fb7gks}.

\end{thebibliography}

\end{document}